\documentclass{article} %
\usepackage{iclr2027_conference,times}
\usepackage[utf8]{inputenc} %
\usepackage[T1]{fontenc}    %
\usepackage{makecell}
\usepackage{wrapfig}
\usepackage{hyperref}       %
\usepackage{url}            %
\usepackage{graphicx}
\usepackage{subcaption}
\usepackage{booktabs}       %
\usepackage{amsmath, amsthm, amssymb, amsfonts} %
\usepackage{subcaption}
\usepackage{algorithm, algorithmic}
\usepackage{nicefrac}       %
\usepackage{microtype}      %
\usepackage{xcolor}         %
\newtheorem{myprop}{Proposition}
 
\newtheorem{mylem}{Lemma}
\usepackage[normalem]{ulem}
\usepackage{verbatim}

\usepackage{amsmath,amsfonts,bm}

\def\eqref#1{equation~\ref{#1}}

\def\1{\bm{1}}

\def\vu{{\bm{u}}}

\def\vx{{\bm{x}}}
\def\vy{{\bm{y}}}
\def\vz{{\bm{z}}}

\DeclareMathAlphabet{\mathsfit}{\encodingdefault}{\sfdefault}{m}{sl}
\SetMathAlphabet{\mathsfit}{bold}{\encodingdefault}{\sfdefault}{bx}{n}

\title{Mutual Equilibrium: Multimodal Representation Learning through Reciprocal Feedback}

\author{Ho-min Park \\
Data Science Center, Texas Children's Hospital \\
Baylor College of Medicine \\
Houston, Texas 77030, USA \\
\texttt{Ho-min.Park@bcm.edu} \\
\And
Byungkon Kang \\
Department of Computer Science\\
SUNY Korea \\
Incheon, Republic of Korea \\
\texttt{byungkon.kang@sunykorea.ac.kr}
}

\newcommand{\modelname}{MEQ}
\newcommand{\fzx}{F_{\vz_x^*}}
\newcommand{\fzy}{F_{\vz_y^*}}
\newcommand{\gzx}{G_{\vz_x^*}}
\newcommand{\gzy}{G_{\vz_y^*}}

\iclrfinalcopy
\begin{document}

\maketitle

\begin{abstract}
This work proposes a mutual feedback architecture, \modelname{}, that refines the two inputs, of possibly different modalities, into a pair of coupled embeddings such that each embedding reflects the information of the other. The core idea is to incorporate continuous interchange of information between the two inputs. This idea leads to a mutual feedback architecture consisting of two components whose outputs are fed back into the other. The final output of this model is defined as the fixed point of this interaction. We provide theoretical analysis that offers interpretation of this model as well as design choices to prevent failure cases. We show the benefits of \modelname{} through classification and visual grounding tasks spanning various datasets. Quantitatively, our model outperforms or shows competitive performance on concatenation-based multimodal classification problems. Qualitatively, the proposed interactive mechanism allows the model to progressively refine the visual grounding when paired with complementary modality, thus demonstrating the power of mutual feedback under such settings.
\end{abstract}

\section{Introduction}
Consider processing two inputs, $\vx$ and $\vy$, of possibly different modalities. The goal is to derive a representation that accounts for both inputs in the sense that the individual information are combined appropriately so as to be useful for a task involving both modalities. This problem is addressed in many disciplines such as machine learning, modality fusion, and data mining. It has become one of the most important preprocessing steps in multimodal domains.\\
A de facto standard way of approaching this problem is to concatenate $\vx$ and $\vy$ before passing them through a neural network in the hope that the information are sufficiently fused. However, existing methods typically treat the individual features as fixed descriptors and combine them in a single feed-forward operation. Such formulations implicitly assume that each extracted embedding is already an optimal representation. But complementary modalities often provide information that should alter how the other modality is interpreted. For instance, an image of a tiger hidden in bushes paired with a text reading ``A tiger staying hidden'' should have a representation that reflects the `tiger-ness' more strongly than one without such text. Conversely, the corresponding text representation should also be updated to include the information that the tiger is hiding in bushes.

We therefore argue that multimodal representation learning is more naturally viewed as a process where inter-modality influence plays an important role. Specifically, we propose an iterative process in which modality-specific representations repeatedly influence one another until reaching a mutually consistent state. This idea resembles how two humans update their beliefs through repetitive exchange until their beliefs agree. It is this iterative influence that we aim to model mathematically and derive an algorithm that can refine the two inputs into another pair where each new representation will reflect information in the other.\\
Furthermore, we propose to model this iterative procedure as a pair of neural networks that feed each other, where one's output becomes part of the input to the other (Fig.~\ref{fig:diag}). The final output of this architecture is the \emph{fixed point} of this iteration, which is a natural way to embody the notion of agreement: when the information exchange no longer updates each other's belief, an agreement is said to be achieved. Thus, our central contribution becomes a mutually interactive system that achieves equilibrium.\\
Although we primarily investigate our algorithm in a representation learning framework, such an iterative refinement approach can have numerous applications in other areas. For example, cases when we need to correct conflicting sensor inputs, or improve LLMs through iteration are our intended future works.
\subsection{Related Works and Contributions}
\label{relworks}
In this section, we provide a list of works relevant to ours, focusing on the following three areas that are related to our topic.
\paragraph{Multimodal fusion} Although we do not deal with multimodal fusion directly, we review this field as it somewhat overlaps with ours in application domain. Multimodal fusion has long been considered the essential step in multimodal learning. While there are numerous prior work dealing with multimodal fusion itself, a good majority of them attempt variations of concatenation of features~\citep{mm_survey, mm_rl_survey}. However, there are a few that share the spirit of our approach. The tensor fusion network (TFN:~\cite{tfn}) is one of the earlier works that address fusion by means of intra- and inter-modality connection instead of relying on simple concatenation. Co-attention network~\citep{coatt} and LXMERT~\citep{lxmert} follow a similar path, except they specifically use cross-modal attention layers in place of TFN's outer products, to train text and vision encoders via cross-attention. %
\textcolor{black}{Also, TFN's high computational complexity was later addressed by low-rank multimodal fusion (LMF: \citet{lmf}), while multimodal compact bilinear pooling (MCB: \citet{mcb}) had earlier reduced the cost of bilinear interaction by sketching the outer product.}
MulT~\citep{muit} extended the idea to stacks of multimodal cross-attentions in a Transformer architecture to achieve repeated interaction between modalities. However, MulT limits the number of interactions to a fixed finite number, whereas we approach it with a fixed point.
\paragraph{Modality interaction} The idea of inputs interacting with each other has also been used in other settings. For example, self-supervised approaches like BYOL~\citep{byol} and SimSiam~\citep{simsiam} demonstrate the value of  repeated information exchange. However, such approaches also perform finite number of interactions. In addition, such a concept has also been extended to mixture of experts~\citep{interaction_moe} and graph node classification~\citep{influence_mod}.\\
Several more recent works have expanded the fusion domain to that of large language models (LLMs). In particular, it is worth noting that the concept of \emph{multi-agent LLMs} was proposed in the context of repeated information exchange. These works are based on the idea that multiple LLMs can refine the output by interacting with each other's outputs~\citep{moa,madebate}. The crucial difference between these works and ours is that first, we operate in representation space, widening the applicability, and secondly we theoretically analyze the dynamic coupling of the two encoders.
\paragraph{Deep equilibrium models} Heavily based on the concept of a fixed point, our model naturally falls into the deep equilibrium model (DEQ:~\cite{deq}) family. These are models that return the fixed point of iterated hidden state updates as outputs. We will borrow common tools such as proofs and Jacobian analysis~\citep{jacc_eq,mdeq} from such works when analyzing our algorithm. Of particular interest among these models is the work on deep equilibrium for multimodal fusion~\citep{deqmm}. While this work does share a theme with our work, the main approach relies on a simple DEQ application to weighted sum of features (i.e., a single hidden state). Again, our work focuses on how the coupled dynamics arise as the two modalities interact\textcolor{black}{, and we compare against that fusion design directly in Section~\ref{sec:ablation}.}

That said, our contributions can be summarized as follows.
\begin{itemize}
    \item We formulate mutually interactive feature learning as a coupled dynamic system, whose final outcome is the fixed point of the interaction. The interaction is performed in representation space rather than in output space, allowing for further applications.
    \item Theoretical properties of the said system are identified and used in deriving the main algorithm. This also results in a framework on which variant approaches can be based by allowing for different modules that fit the theoretical profile.
    \item Our experiments verify that such an equilibrium is meaningful in a variety of prediction tasks. Moreover, we show there is value in iterative refinement itself.
\end{itemize}
\section{Main Approach} 
\label{mainapp}
Given two inputs (or modalities) $\vx\in\mathcal{X}\subseteq\mathbb{R}^n$ and $\vy\in\mathcal{Y}\subseteq\mathbb{R}^m$, the goal is to generate a pair of embeddings $\vz_x\in\mathbb{R}^k$ and $\vz_y\in\mathbb{R}^k$ such that they are mutual reflections of the other inputs\footnote{We discuss the case of more than two modalities in Appendix~\ref{app:multmod}}.\\
The high-level idea is to first derive an intermediary representation by reflecting $\vy$'s information to $\vx$. Then this intermediary information should also be used to collect $\vx$'s information into $\vy$. Ideally, such a back-and-forth should run infinitely long in order to properly reflect the mirrored information in both inputs. We define the stopping point of this refinement to be the fixed point of this iteration. %

To formally describe this setting, let $F_\theta$ and $G_\phi$ be the two feature extractors for the two inputs $\vx$ and $\vy$, respectively parameterized by sets of parameters $\theta$ and $\phi$. Each feature extractor will produce a hidden state that summarizes the information contained in the two inputs. 
More precisely, $F_\theta:\mathcal{X}\times\mathbb{R}^d\times\mathbb{R}^d\mapsto\mathbb{R}^d$ is a function that takes $\vx$, its own hidden state, and the other hidden state as input, and produces an updated hidden state $\vz_x\in\mathbb{R}^d$. This other given hidden state is meant to represent the summary of $\vy$ given $\vx$. %
Conversely, $G_\phi:\mathcal{Y}\times\mathbb{R}^d\times\mathbb{R}^d\mapsto\mathbb{R}^d$ achieves the same effect on $\vy$, providing the necessary input $\vz_y$ to $F_\theta$ (Fig.~\ref{fig:main1}).\\
\begin{figure}[t]
    \centering
    \begin{subfigure}{0.5\textwidth}
    \centering  
    \includegraphics[width=0.7\linewidth]{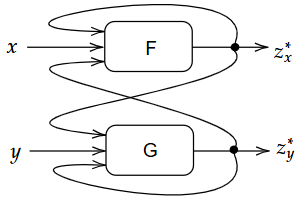}
    \caption{Overall diagram}
    \label{fig:main1}
    \end{subfigure}%
    \begin{subfigure}{0.5\textwidth}
    \centering  
    \includegraphics[scale=0.6]{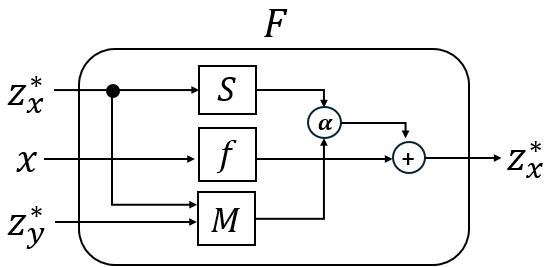} %
    \caption{Template for $F(x, \vz_x^*, \vz_y^*)$.}
    \label{fig:main2}
    \end{subfigure}
    \caption{Diagrams for the main architecture. Left is the overall architecture, and the right is the schematic diagram for $F$ ($G$ is symmetric). }
    \label{fig:diag}
\end{figure}
Eventually, our proposed model will take a mutually recursive form where the output of one component is fed back into the other, and vice versa. When viewed as an iterative process, one could define a single step of update as follows (left):
\begin{equation}  %
\begin{cases}
    \vz_x^{(k+1)}\gets F_\theta(\vx, \vz_x^{(k)}, \vz_y^{(k)}) \\
    \vz_y^{(k+1)}\gets G_\phi(\vy, \vz_y^{(k)}, \vz_x^{(k)})
\end{cases}
~~~\operatorname*{\Rightarrow}_{k\rightarrow\infty}~~~
\begin{cases}
    \vz_x^{*}=F_\theta(\vx, \vz_x^{*}, \vz_y^{*}) \\
    \vz_y^{*}=G_\phi(\vy, \vz_y^{*}, \vz_x^{*})
\end{cases}
    \label{eq:model}
\end{equation}
However, as the right-hand side of the above equation states, the ultimate outcome are the fixed points of such iterations. We compactly represent this in terms of the joint operator $T\triangleq[F_\theta; G_\phi]$: 
\begin{equation}
    \vz^* = T(\vu, \vz^*),~~~\text{where }~\vu\triangleq[\vx;\vy]~\text{ and } \vz^*\triangleq[\vz^*_x;\vz^*_y].
    \label{eq:compact}
\end{equation}
The Jacobian $J^*_T$ of $T$ at $\vz^*$ takes a block form as follows.
\begin{equation}
    \frac{\partial J}{\partial \vz^*}\triangleq J^*_T=\begin{pmatrix}
    \partial F/\partial \vz_x^* & \partial F/\partial \vz_y^*\\
    \partial G/\partial \vz_x^* & \partial G/\partial \vz_y^*
    \end{pmatrix}\triangleq
    \begin{pmatrix}
        \fzx & \fzy \\
        \gzx & \gzy
    \end{pmatrix}.
    \label{eq:block}
\end{equation}

While the right-hand side of Equation~\ref{eq:model} suggests that a \textit{fixed point iteration} (FPI) can be used to find such fixed points, it is common to rely on solver-based fixed point computation such as Anderson acceleration~\citep{anderson} or Broyden's algorithm~\citep{broyden}. %
{In practice we use a damped FPI truncated at a fixed number of steps $K$, and read the representation at $\vz^{(K)}$ rather than at a root. %
The reason for this is due to the nature of our approach: Being a coupled dynamic system, using solver-based approaches in our setting can fail numerically due to sensitivity with respect to hyperparameters. As such, we rely on a more robust FPI, combined with a residual penalty term described in Sec.~\ref{sec:obj} (also see Tbl.~\ref{tab:nofp}).

Notice that Equation~\ref{eq:compact} is precisely the formulation given in DEQ~\citep{deq}. Because our model is a special case of DEQ, it also inherits the theoretical guarantees, such as gradient computation and universality. %
However, ours differs from DEQ in how the equilibrium arises: DEQ is essentially a single-step computation,
whereas our model computes the fixed points by means of \textit{interaction} between the two inputs. The presence of such interactions requires us to carefully design the underlying architecture (Sec.~\ref{sec:design}).\\
Another key characteristic of our model is that it produces a pair of \emph{coupled} representations. That is, the $\vz_x$ and $\vz_y$ do not stand independently but rather jointly, each containing information from the other modality. We hypothesize that such paired embeddings will allow us to better represent the complementary information present in both inputs by iteratively referring to and updating each other's beliefs. 
\subsection{Theoretical Properties and Design Choices}
\label{sec:design}
In this section, we lay out theory-inspired design details for $F_\theta$ and $G_\phi$.
We first state a simple lemma that will assist our analysis.
\begin{mylem}
Given Equation~\ref{eq:model}, the Jacobians of $\vz_x^*$ and $\vz_y^*$ are given as
\begin{align*}
    \frac{d\vz_x^*}{d(\cdot)}&=\left(I-\fzx-\fzy\left(I-\gzy\right)^{-1}\gzx\right)^{-1}\left(\frac{\partial F}{\partial (\cdot)}+\fzy\left(I-\gzy\right)^{-1}\frac{\partial G}{\partial (\cdot)}\right)\nonumber\\
    \frac{d\vz_y^*}{d(\cdot)}&=\left(I-\gzy-\gzx\left(I-\fzx\right)^{-1}\fzy\right)^{-1}\left(\frac{\partial G}{\partial (\cdot)}+\gzx\left(I-\fzx\right)^{-1}\frac{\partial F}{\partial (\cdot)}\right),
\end{align*}
where $(\cdot)$ is a placeholder indicating any independent variable (e.g., $\theta$ or $\vx$).
\label{eq:lem1}
\end{mylem}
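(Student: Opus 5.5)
Our plan is to differentiate the fixed-point equations on the right-hand side of Equation~\ref{eq:model} implicitly and solve the resulting linear system by block elimination (a Schur complement argument). We assume $F_\theta$ and $G_\phi$ are continuously differentiable near $(\vu,\vz^*)$, and that $I-J^*_T$ together with the diagonal blocks $I-\fzx$ and $I-\gzy$ are invertible. These are exactly the conditions under which the implicit function theorem makes $\vz^*$ a differentiable function of any independent variable $(\cdot)$, as in DEQ~\citep{deq}.

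First, we differentiate $\vz_x^*=F_\theta(\vx,\vz_x^*,\vz_y^*)$ and $\vz_y^*=G_\phi(\vy,\vz_y^*,\vz_x^*)$ with respect to $(\cdot)$ by the chain rule. Here $\partial F/\partial(\cdot)$ denotes the explicit partial derivative with the hidden states held fixed. This yields the coupled linear system
\begin{align*}
(I-\fzx)\,\frac{d\vz_x^*}{d(\cdot)} - \fzy\,\frac{d\vz_y^*}{d(\cdot)} &= \frac{\partial F}{\partial(\cdot)},\\
-\gzx\,\frac{d\vz_x^*}{d(\cdot)} + (I-\gzy)\,\frac{d\vz_y^*}{d(\cdot)} &= \frac{\partial G}{\partial(\cdot)},
\end{align*}
which is simply $(I-J^*_T)\,d\vz^*/d(\cdot)=\partial T/\partial(\cdot)$ written in the block form of Equation~\ref{eq:block}.

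Next, we eliminate one unknown at a time. Using invertibility of $I-\gzy$, the second equation gives
$$\frac{d\vz_y^*}{d(\cdot)}=(I-\gzy)^{-1}\Bigl(\frac{\partial G}{\partial(\cdot)}+\gzx\frac{d\vz_x^*}{d(\cdot)}\Bigr).$$
Substituting this into the first equation collects the $d\vz_x^*/d(\cdot)$ terms into the Schur complement $S_x\triangleq I-\fzx-\fzy(I-\gzy)^{-1}\gzx$. The right-hand side becomes $\partial F/\partial(\cdot)+\fzy(I-\gzy)^{-1}\partial G/\partial(\cdot)$. Left-multiplying by $S_x^{-1}$ gives the first formula. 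The second formula follows by the symmetric computation: solve the first equation for $d\vz_x^*/d(\cdot)$ using $(I-\fzx)^{-1}$, then substitute into the second. Equivalently, one can swap the roles $F\leftrightarrow G$ and $x\leftrightarrow y$.

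The only step needing care is the invertibility of the Schur complements $S_x$ and $S_y$. We would justify it through the standard block-determinant identity
$$\det(I-J^*_T)=\det(I-\gzy)\det(S_x)=\det(I-\fzx)\det(S_y).$$
Hence, once $I-J^*_T$ and the two diagonal blocks are invertible, both Schur complements are invertible. A sufficient condition is that $\|J^*_T\|<1$ in an operator norm. This makes $I-J^*_T$ invertible by a Neumann series, and since $\|\fzx\|\le\|J^*_T\|$ and $\|\gzy\|\le\|J^*_T\|$ for the induced block norm, the diagonal blocks are invertible too. This contraction-type condition is also natural for the damped FPI used in practice, and we would state it as the standing assumption of the lemma.
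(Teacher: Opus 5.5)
Your proposal is correct and follows essentially the paper's route: implicit differentiation of the two fixed-point equations, then block (Schur-complement) elimination. The paper solves the first equation for $d\vz_x^*/d(\cdot)$, substitutes to obtain the $\vz_y^*$ formula, and appeals to symmetry for the other, which is the mirror image of your elimination.

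Your block-determinant argument for the invertibility of your Schur complements $S_x$ and $S_y$ is a useful addition, since the paper's proof simply assumes the inverses exist. One small correction: the implicit function theorem needs only $I-J^*_T$ invertible, and invertibility of $I-\fzx$ and $I-\gzy$ is required only so that the stated formulas make sense.
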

\begin{proof}
    Simple extension of the one given in~\citet{deq}. See Appendix~\ref{app:proof}.
\end{proof}
\textcolor{black}{The Lemma gives the gradient of the equilibrium with respect to the parameters:}
\begin{equation}
    \frac{\partial \mathcal{L}}{\partial \Theta}=\frac{\partial \mathcal{L}}{\partial \vz_x^*}\frac{d\vz_x^*}{d\Theta}+\frac{\partial \mathcal{L}}{\partial \vz_y^*}\frac{d\vz_y^*}{d\Theta},
    \label{eq:train}
\end{equation}
where $\mathcal{L}=\mathcal{L}(\mathcal{D};\Theta)$ is the loss function involving the parameter set $\Theta=\{\theta,\phi\}$ and the dataset $\mathcal{D}$ (Sec.~\ref{sec:obj}).\\
\textcolor{black}{In practice gradients are obtained by backpropagating through the $K$ unrolled updates rather than by an implicit solver, so Lemma~\ref{eq:lem1} is the analytical tool used above rather than the implemented gradient path (Appendix~\ref{app:exp_setup}).}\\
But Lemma~\ref{eq:lem1} can also be used to analyze how and when a failure occurs in the proposed architecture, especially by means of Jacobian analysis~\citep{imp_train,jacc_eq}. Viewing the fixed points as a function of the original inputs, we can interpret the lemma as stating how much raw information is reflected in the fixed point. Of particular interest is the pair of inverse terms appearing in both equations. These inverse terms are independent of the placeholder $(\cdot)$, and they capture the total amount of informational influence being passed between the two modalities. We thus call these inverses the \emph{total influence terms}. Please see Appendix~\ref{app:influence} for a detailed analysis of this term.

\paragraph{Sensitivity} Another aspect of the importance of the influence term is related to the sensitivity of the fixed points with respect to the raw inputs.
That is, following the compact notation given by Eqn.~\ref{eq:compact}, we look at the amount of perturbation $\epsilon_\vz$ of the fixed point $\vz^*$ when the input $\vu=[\vx;\vy]$ gets perturbed by a small amount $\epsilon_\vu$. Standard Taylor's expansion yields the following approximate bound (derivation in Appendix~\ref{app:deriv}):
\begin{equation*}
    \Vert\epsilon_\vz\Vert\lesssim\left\Vert\left(I-J_T^*\right)^{-1}\right\Vert\left\Vert\frac{\partial T}{\partial \vu}\right\Vert\Vert\epsilon_\vu\Vert.
\end{equation*}
The inverse term on the right hand side is the gathering of the total influence terms appearing in Lemma~\ref{eq:lem1}. This also signifies how the overall influence affects the sensitivity of the fixed point. An important consequence of this derivation is that the smaller the spectral radius $\rho(J_T^*)$, the more stable the fixed points are. This fact suggests two possible design choices: (1) structurally constrain the operator $T$ to have $\rho(J_T^*)$ less than 1, or (2) incorporate the spectral radius into the penalty term in the main objective.\\
One obvious way to ensure the first approach is to let $T$ be contractive, or 1-Lipschitz~\citep{1lexpl,approx_1nn} in the hidden states\footnote{A \emph{contractive map} $f$ satisfies $\Vert f(x)-f(y)\Vert<\Vert x-y\Vert$ for all possible $x$ and $y$ in $f$'s domain.}. However, while certain classes of 1-Lipschitz neural networks are proven to be universal function approximators~\citep{groupsort}, making a function globally contractive could be detrimental in our case (see `Fixed point collapse' below). Hence, we opt for the second option of regularization.
\paragraph{Cross-modal sensitivity} We can also analyze how sensitive the fixed points are with respect to the inputs $\vx$ and $\vy$. In particular, we analyze $d\vz_x^*/d\vy$ and $d\vz_y^*/d\vx$, which reflect how much of the raw inputs are reflected in the other embedding. By plugging in $\vx$ and $\vy$ into Lemma~\ref{eq:lem1}, we get
\begin{align}
    \frac{d\vz_x^*}{dy}&= \left(I-\fzx-\fzy(I-\gzy)^{-1}\gzx\right)^{-1}\fzy(I-\gzy)^{-1}\frac{\partial G}{\partial y}\nonumber\\
    \frac{d\vz_y^*}{dx}&= \left(I-\gzy-\gzx(I-\fzx)^{-1}\fzy\right)^{-1}\gzx(I-\fzx)^{-1}\frac{\partial F}{\partial x}.
    \label{eq:crosssensitivity}
\end{align}
Assuming that the inverses exist, the fact that these Jacobians are close to zero means that the fixed points computed have been detached from the other raw input. In order to prevent this failure mode, $\fzy$ and $\partial G/\partial y$ (resp., $\gzx$ and $\partial F/\partial x$) should be non-zero. This immediately suggests two conditions: (1) $F$ and $G$ should be using $\vz_y^*$ and $\vz_x^*$, respectively, non-trivially, and (2) $F$ and $G$ should be using $x$ and $y$, respectively, non-trivially. That is, both the cross-modal hidden states and raw inputs must be used in the other functions.
We thus propose to use the following templates for $F$ and $G$ (Fig.~\ref{fig:main2}):
\begin{align}
    F(x, \vz_x^*, \vz_y^*;\alpha)&=\alpha S(\vz_x^*)+(1-\alpha)M(\vz_x^*, \vz_y^*)+f(x)\nonumber\\
    G(y, \vz_y^*, \vz_x^*;\alpha)&=\alpha S(\vz_y^*)+(1-\alpha)M(\vz_y^*, \vz_x^*)+g(y),
    \label{eq:template}
\end{align}
where $\alpha$ is a learned mixing parameter and $f(\cdot), g(\cdot)$ are raw input feature extractors for $x$ and $y$, respectively. $S$ is the self-influence function, and $M$ is the cross-modal influence function. These are any differentiable functions that extract uni- and bimodal information from the hidden states.\\ 
Such a design is inspired by how the total influence decomposes into sums of self- and cross-modal influence terms. Even though it has no direct theoretical consequences, having both types of influences in $F$ and $G$ will ensure the existence of the Jacobians in Equation~\ref{eq:sum}. The $f$ and $g$ terms added in Equation~\ref{eq:template} are to ensure the non-zero cross-sensitivity given by Equation~\ref{eq:crosssensitivity}. \textcolor{black}{In our vision-language experiments $S$ is a self-attention block and $M$ a cross-attention block in which the other modality supplies keys and values; on CMU-MOSEI both are replaced by a per-modality MLP over its own injection, its own state and the other states.} %

\paragraph{Fixed point collapse} Next, we look at a failure case of our model that we term \emph{fixed point collapse}. This failure happens when both fixed points $\vz_x^*$ and $\vz_y^*$ get mapped to a single constant regardless of $\vx$ and $\vy$. An analytical way of seeing this is when the norms of the Jacobians $d\vz_x^*/dx$ and $d\vz_y^*/dy$ given by Lemma~\ref{eq:lem1} become close to 0. This is because the functions $F$ and $G$ can be thought of as being parameterized by $x$ and $y$, respectively, and the fixed points should be determined by the input values under normal circumstances. If not, the fixed points are independent of the inputs, hence fixed constants. \\
A clear case of this happens when the sub-Jacobians $\partial F/\partial x$ and $\partial G/\partial y$ are zero -- i.e., the functions $F$ and $G$ do not use the raw inputs. So it is clear that the two modules must use the inputs in a non-trivial way. In conjunction with these sub-Jacobian norms being close to zero, a collapse happens if the norm of the total influence is very small at the same time. Indeed, the norm of the input sensitivity, $d\vz_x^*/d\vx$, is bounded as follows (Let $(\cdot)=\vx$ in Lemma~\ref{eq:lem1}):
\begin{equation}
    \left\Vert\frac{d\vz_x^*}{d\vx}\right\Vert \leq \left\Vert\left(I-\fzx-\fzy(I-\gzy)^{-1}\gzx\right)^{-1}\right\Vert\left\Vert\frac{\partial F}{\partial \vx}\right\Vert
    \label{eq:selfsens}
\end{equation}
So even if $\partial F/\partial \vx$ is non-zero (but small), a fixed point collapse effect can still happen if the norm of the total influence is small too. Note that this partially contrasts the suggestions made in the sensitivity discussion. In order to minimize the perturbation error, it was advised that $\rho(J_T^*)$ be as small as possible. However, it also turns out that having too small a $\rho(J_T^*)$ can induce a near-fixed point collapse, albeit conditioned on the small sub-Jacobian norm. One way to view this is: If the total influence is too large, the model will be overflown with influence making the fixed points too sensitive. If it is too small, then there will not be enough momentum to produce meaningful fixed points.

\subsection{Training Objective}
\label{sec:obj}
The main training objective of \modelname~depends on the task on which it is being trained. In addition to the main objective $\mathcal{L}_{task}$, we add two regularization terms to address the theoretical issues raised.

In order to balance expressivity and collapse prevention, the spectral radius $\rho(J_T^*)$ needs to be smaller than 1, but not too small. Although this is conditioned on the $\partial J/\partial\vx$ being small too, we propose to use a range penalty on $\rho(J_T^*)$ around the fixed point as a safety measure.
\begin{equation*}
    \mathcal{L}_{jac}=\left(\max\left\{0, \Vert\hat{J}_T\Vert-\rho_h\right\}\right)^2+\left(\max\left\{0, \rho_\ell-\Vert\hat{J}_T\Vert\right\}\right)^2,
\end{equation*}
where $\rho_h$ and $\rho_\ell$ are the hyperparameters used to put the Jacobian norm inside the range $[\rho_\ell,\rho_h]$. Empirically, this also has the effect of making the computation deeper. That is, too small a spectral radius will make the fixed points too easy to reach, which in turn loses the benefits of iterative refinement. Forcing a lower bound on the radius will result in a more expressive function.\\
We use the estimate $\Vert\hat{J}_T\Vert$ of the true Jacobian norm around the fixed point $\vz^*$, since full computation is expensive. For example, we use the Hutchinson estimator~\citep{hutch} in our experiments.

The second regularization term we propose is called the \textit{fixed point correction} term. For that, we first define the fixed point residual given the input $\vu=[\vx;\vy]$ as follows:
\begin{equation*}
    r(\vz^*;\vu)\triangleq \vz^*-T(\vu, \vz^*)
\end{equation*}
Then we define the fixed point correction regularizer as the sum of
\begin{equation*}
    \mathcal{L}_{fpc}=\frac{1}{2}\sum_{m\in\{\vx,\vy\}}\frac{\Vert r(\vz^*;\vu)_m\Vert}{{\Vert\vz^*_m\Vert}},    
\end{equation*}
where the subscript $m$ indexes one of the two input modalities. While $r(\vz^*;\vu)$ should be zero in theory, practical limitations or settings often make it nonzero. %
{For instance, a fixed iteration budget $K$ leaves a nonzero residual whenever the map is slow to settle, so the representation we read is only an approximate fixed point.}
High value of $\mathcal{L}_{fpc}$ indicates those difficult cases. Penalizing the model using this term will update the model (e.g., by flattening) to yield fixed points that can be reached through small number of iterations.\\
Empirically this term is what keeps the truncated iterate close to self-consistent%
(Table~\ref{tab:nofp} in Appendix~\ref{app:exp_analysis}).
The final objective we propose is thus a weighted sum of these regularizers plus the main task loss.
\begin{equation}
    \mathcal{L}(\mathcal{D};\Theta)=\mathcal{L}_{task}+\lambda_j\mathcal{L}_{jac}+\lambda_f\mathcal{L}_{fpc}
    \label{eq:loss}
\end{equation}

\section{Experiments}
\label{sec:experiments}
The central goal of the experiments presented here is to demonstrate the value of mutual iterative refinement of features. That is, whether the prediction can be improved or recovered from the initial faulty guess through repeated refinement, rather than achieving state-of-the-art performance. We first describe the experimental setups chosen to verify this property. For ease of presentation, we denote our proposed model as \modelname{} (\textbf{M}utual \textbf{EQ}uilibrium model). Please refer to Appendix~\ref{app:exp_setup} for details regarding implementation and experimental setups. Appendices~\ref{app:extra_task} and~\ref{app:exp_analysis} contain more experimental results and analyses.
\paragraph{Baselines} While we acknowledge there are many multimodal fusion techniques, we rely on simpler baselines in order to eliminate the extra architectural innovations and isolate the effects of the proposed structure. The baseline we compare against are simple concatenation-based fusion and attention-based fusion. The attention-based fusion approach is further divided into two variants: (1) perform self attention on the concatenated inputs $[\vx;\vy]$, and (2) perform cross-attention on the two inputs. In comparison, our model produces a pair of vectors $\vz_x^*, \vz_y^*$, which are concatenated to form the final feature. All attention models were adjusted to match the parameter count to that of our model.
\paragraph{Datasets}
We evaluate on the following benchmarks spanning different tasks and modalities: Hateful Memes~\citep{hateful}, VCR~\citep{vcr}, VQA v2 \citep{vqa_v2}, SNLI-VE \citep{snli}, and CMU-MOSEI~\citep{mosei}. Please refer to Appendix~\ref{app:exp_setup} for detailed statistics of these datasets.
\subsection{Accuracy Results}

We first verify the usability of \modelname{} in the prediction domain. Our \modelname{} improves over simple concatenation on four of the five benchmarks (Table~\ref{tab:main}). The largest gain is on VQA v2 ($+14.36$\,pp), and the exception is CMU-MOSEI, where the two are within seed noise of each other ($-0.59$\,pp against a two-sigma spread of $1.10$\,pp). Against the stronger of the two parameter-matched attention baselines, \modelname{} leads on SNLI-VE, VCR and VQA v2 by smaller margins ($+0.74$ to $+0.94$\,pp), and trails on Hateful Memes ($-2.31$\,pp). Regarding the CMU-MOSEI results \textcolor{black}{being indistinguishable from both baselines, further examination showed that this is due to the characteristics of the dataset itself, rather than the structural design choices (Sec.~\ref{sec:ablation}).
}

\begin{table}[h]
\centering
\caption{Accuracy results across multimodal benchmarks. All numbers are means over three seeds under an identical training recipe. Best attn. is the stronger of parameter-matched cross- and self-attention. \textcolor{black}{ VQA~v2 and VCR use non-standard evaluation subsets; see Appendix~\ref{app:exp_setup}.}}
\label{tab:main}
\begin{tabular}{lccccc}
\toprule
Dataset & Metric & Concat & {Best attn.} & \modelname & $\Delta$ concat / {attn.} \\
\midrule
Hateful Memes & AUROC & 0.6873 & {0.7251} & 0.7021 & +1.48 / {$-$2.31}pp \\
VCR & Acc & 56.38 & {62.09} & \textbf{62.95} & +6.57 / {+0.85}pp \\
VQA v2 & Acc & 48.97 & {62.39} & \textbf{63.33} & +14.36 / {+0.94}pp \\
SNLI-VE & Acc & 71.61 & {74.40} & \textbf{75.15} & +3.53 / {+0.74}pp \\
CMU-MOSEI & Acc-7 & 54.79 & {54.74} & 54.20 & $-$0.59 / {$-$0.54}pp \\
\bottomrule
\end{tabular}
\end{table}
While \modelname{} outperforms simple concatenation, there seems to be little difference between the attention-based baselines. We conjecture this is because of how attention mechanisms are already powerful modality fusion algorithms~\citep{attn_strong,mm_transformer_survey}. Adding to that is the fact that the CLIP and BERT encoders for both modalities are state-of-the-art. This is somewhat expected, since the strength of our model is not in modality fusion but in iterated reasoning. We show this evidence in the next section. \textcolor{black}{Iterating to the read depth costs $2.13\times$ the wall clock of the fastest single-pass baseline, while peak memory does not grow with $K$ (Table~\ref{tab:cost}).}

\subsection{Iterative Improvement Results}
In this section, we verify how \modelname{} can improve or update the performance as it performs iteration. As the first example, we visualize  how the visual grounding changes along with the fixed point iteration progress. Specifically, we test our model on the COD10K~\citep{cod10k} dataset, where the task is to locate various real-world objects that lay hidden against natural backgrounds. We first train our \modelname{} model on the SNLI-VE dataset. Then at inference time, we feed the raw image as the $\vx$ component, and the paired $\vy$ component is a text of the form \emph{``A \textnormal{[NAME]} is hidden in the picture''}, where `NAME' is replaced with the ground truth label. Then we see which part of the image is focused at as the fixed point iteration goes on. Figure~\ref{fig:qual} shows the Grad-CAM~\citep{cam} visualization of the object at different iterations. 
\begin{figure}
    \centering
    \includegraphics[width=\linewidth]{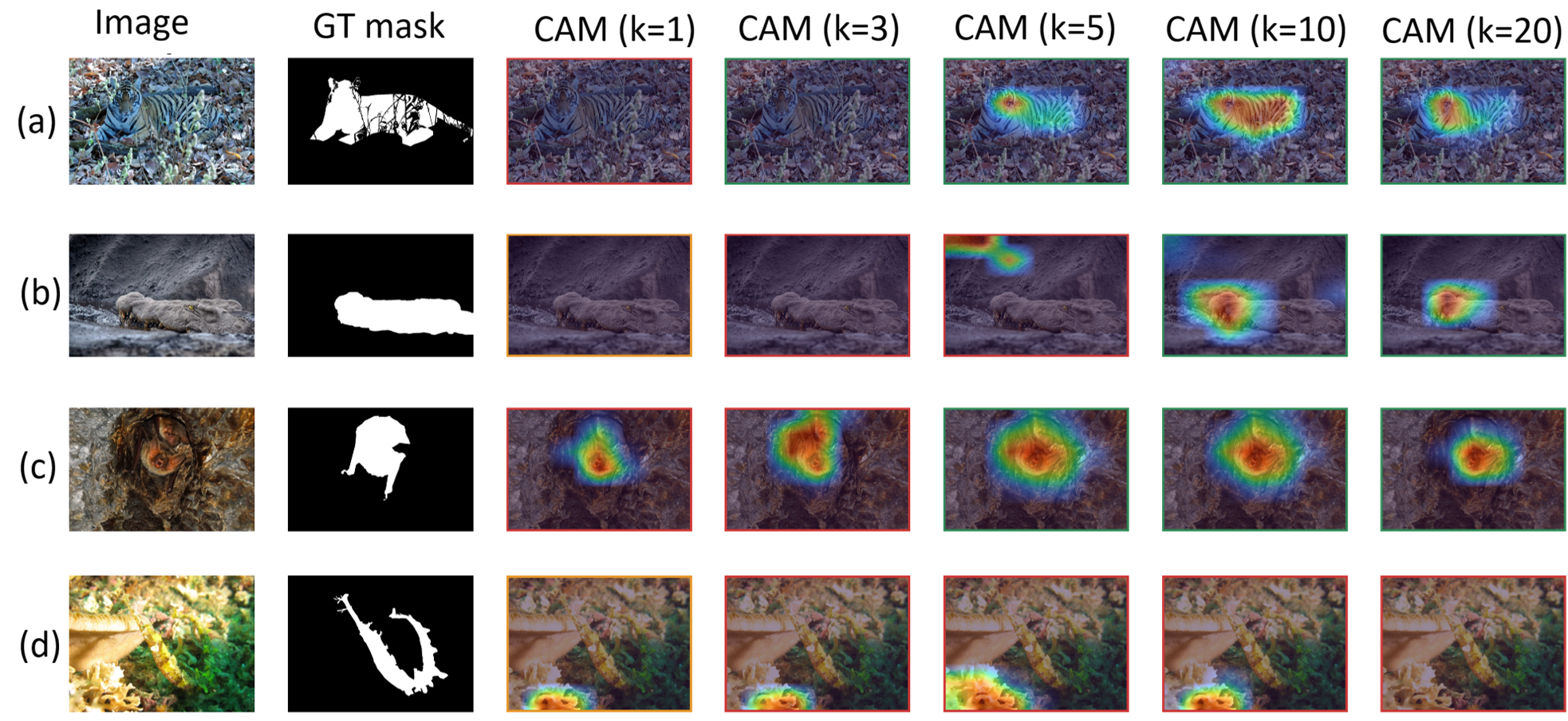}
    \caption{Application of \modelname{} (trained on SNLI-VE) to camouflaged-object images from COD10k, where ground-truth masks allow direct verification of visual grounding.  Border colors give the prediction at each step (green: entailment, red: contradiction, orange: neutral). Best viewed in color.}
    \label{fig:qual}
\end{figure}
On the success cases (top three rows), attention focuses on the hidden objects within the first few iterations, while in the failure case (bottom row) never localizes.\footnote{Results of training and testing on it directly are in Table~\ref{tab:cod} (Appendix~\ref{app:extra_task}).}

In addition to the heat maps, we also show how the prediction probability changes. The accuracy vs. $k$ plot is shown in Figure~\ref{fig:acc_vs_k}:
The probability assigned to entailment for the matched caption (solid) and for a mismatched caption (dotted), over $k \in [1, 20]$ with the read depth $K{=}10$ marked. Dataset-level accuracy peaks near $K$ (80.0 at $k{=}10$ vs.\ 77.2 at $k{=}300$). 
\begin{figure}[ht]
  \centering  
  \begin{subfigure}[b]{0.22\textwidth}
    \centering
    \includegraphics[width=\textwidth]{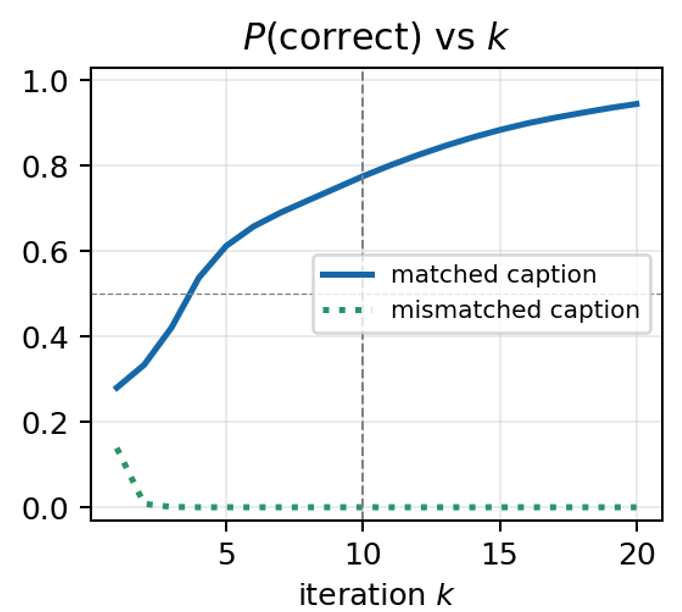}
    \caption{Tiger (success)}
    \label{fig:tiger}
  \end{subfigure}
  \hfill %
  \begin{subfigure}[b]{0.22\textwidth}
    \centering
    \includegraphics[width=\textwidth]{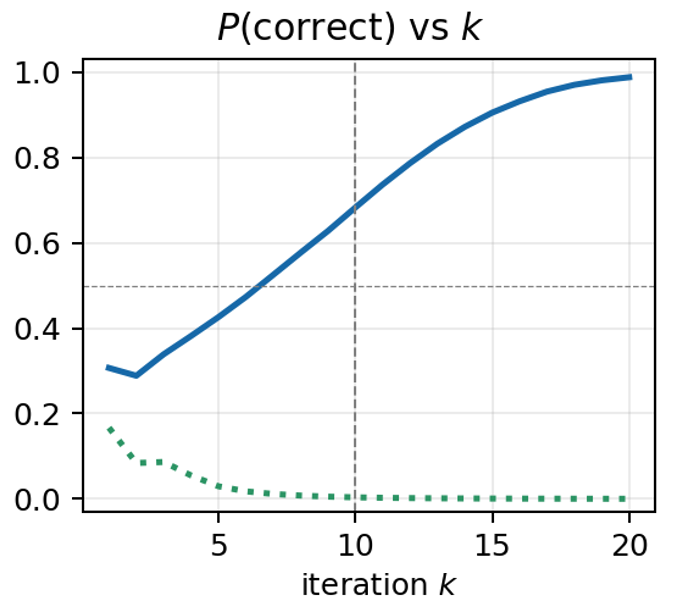}
    \caption{Crocodile (success)}
    \label{fig:croc}
  \end{subfigure}
    \hfill
  \begin{subfigure}[b]{0.22\textwidth}
    \centering
    \includegraphics[width=\textwidth]{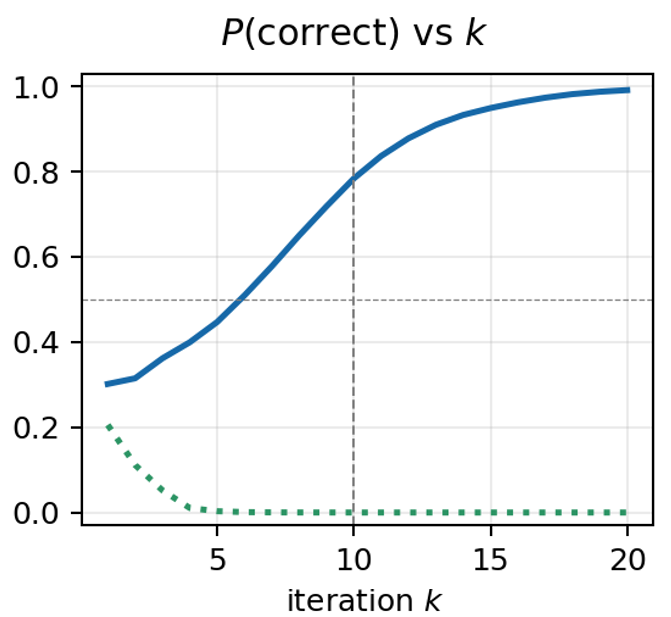}
    \caption{Bat (success)}
    \label{fig:bat}
  \end{subfigure}
  \hfill
  \begin{subfigure}[b]{0.22\textwidth}
    \centering
    \includegraphics[width=\textwidth]{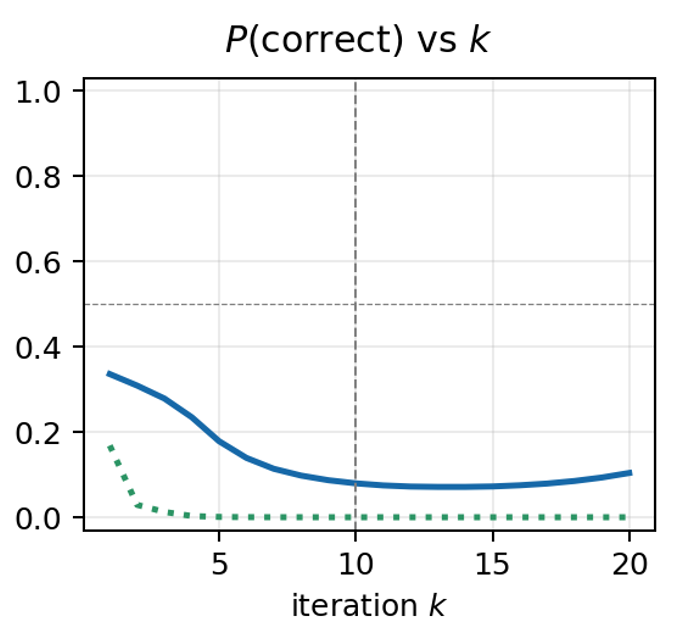}
    \caption{Pipefish (failure)}
    \label{fig:ghost}
  \end{subfigure}

  \caption{Plots of $P(\text{correct})$ vs. $k$.}
  \label{fig:acc_vs_k}
\end{figure}

The dotted green curves on the plots are the accuracy change when the image is paired with a completely irrelevant text (e.g., ``A man is walking down the street.''). 
Here, we can observe that the model increases (and decreases) the correct (and incorrect, resp.) probabilities for the success cases.

\subsection{Ablation Studies}
\label{sec:ablation}
We ablate each component on VQA v2, the benchmark where \modelname's gain over concatenation is largest, so the ablation measures the components where coupling demonstrably pays (Table~\ref{tab:ablation}).  Removing cross-attention costs $4.20$\,pp, ten times its spread, and reducing to a single iteration also degrades significantly, so both the cross-modal path and the iteration itself contribute where coupling pays. The retrained full model ($63.11$) agrees with Table~\ref{tab:main} ($63.33$) within seed noise.

\paragraph{{On Soft Gating}}
Removing the soft gate costs only $0.54$\,pp on VQA v2 (within the paired spread, Table~\ref{tab:ablation}), so the gate is not what makes the model work.
It is, however, where the model can fail. On VQA v2 the learned gate collapsed to approximately $10^{-7}$ on real inputs, which makes the update $F(\vx,\vz_x,\vz_y)=\vz_x$ and drives both cross-modal Jacobians in Equation~\ref{eq:crosssensitivity} to zero. This is the failure mode Section~\ref{sec:design} identifies, observed in training.

\begin{table}[thb]
\centering
\caption{Component ablation on VQA v2}%
\label{tab:ablation}
\begin{tabular}{lcc}
\toprule
Configuration & Acc (\%) & $\Delta$ \\
\midrule
Full \modelname & {63.11} & -- \\
w/o Soft Gating & {62.57} & {$-$0.54} \\
w/o Cross-Attention & {\textbf{58.90}} & {$-$4.20} \\
w/o Self-Attention & {62.33} & {$-$0.78} \\
Single Iteration & {62.39} & {$-$0.71} \\
\bottomrule
\end{tabular}
\end{table}
\paragraph{Is the mixer the point?}
We replaced the cross-modal function $M$ in Equation~\ref{eq:template} with a gMLP~\citep{gmlp} block, within two percent in parameter count ($34.14$M against $33.47$M), and retrained on SNLI-VE. The two are indistinguishable: $75.26\%$ against $75.15\%$, a seed-matched difference of $+0.11$\,pp against a two-sigma spread of $0.25$\,pp. We also asked whether mixing again after the iteration helps, by replacing the pooled-concatenation readout with a symmetric cross-attention merge, against a capacity-matched control with the same parameter budget and no cross-modal path ($38.196$M against $38.195$M). The difference between them is $+0.01$\,pp. Neither the choice of mixer nor additional mixing after the iteration accounts for the gain. This seems to hint at the architectural contribution to the current level of performance, but more study is needed to either confirm or refute this claim. We leave it as future work to further verify this.

\paragraph{Is it the two states?}
\textcolor{black}{Two parameter-matched baselines replace only the state layout: one joint state that still carries tokens, and the feature-sum design of \citet{deqmm}, which pools each modality to a vector. \modelname{} leads both on VQA~v2, by $0.35$\,pp and $11.39$\,pp against paired $2\sigma$ of $0.17$ and $1.34$\,pp. CMU-MOSEI isolates the state count, since its state is already one vector per modality: collapsing the three costs $0.84$\,pp against a paired $2\sigma$ of $0.22$\,pp, even there, where \modelname{} does not beat its fusion baselines. On Hateful Memes nothing separates (Table~\ref{tab:statedesign}).} See Appendix~\ref{app:exp_analysis} (`State Design' paragraph) for more discussions on this matter.

\paragraph{\textcolor{black}{Do these benchmarks need both modalities?}}
CMU-MOSEI is the one benchmark where \modelname{} does not improve on its baselines, so we asked whether it rewards fusion at all. Retraining each model from scratch with one modality zeroed at its encoder output gives Table~\ref{tab:modality}. On CMU-MOSEI a model that can see neither audio nor video is indistinguishable from the full model and from both baselines, and removing the cross-modal path from the update there costs $0.13$\,pp against a paired $2\sigma$ of $0.97$\,pp, where the same removal costs $4.20$\,pp on VQA~v2. We therefore read the CMU-MOSEI result as a property of that benchmark under our feature pipeline rather than of the architecture. The same control separates cleanly on Hateful Memes and on SNLI-VE, so it is capable of detecting a modality that matters. SNLI-VE is worth a second look: its text-only model still reaches $70.42$, far above the three-way chance rate, which is the annotation artifact SNLI is known for. On top of that artifact the image is worth $4.73$ points to \modelname{} and only $1.19$ to concatenation, so most of the $3.53$ point gap between them in Table~\ref{tab:main} is a difference in how much of the image each model uses.
\begin{table}[htb]
\centering
\caption{Single-modality controls. Each entry is a model retrained from scratch with one modality zeroed at the encoder output, means over three seeds. Metrics differ per benchmark, so only the verdicts are comparable across rows, not the sizes of the gaps. CMU-MOSEI has three modalities, so `text only' there means audio and video both zeroed and an image-only arm does not apply.}
\label{tab:modality}
\begin{tabular}{lcccc}
\toprule
Dataset & Metric & Both & Text only & Image only \\
\midrule
SNLI-VE       & Acc   & 75.15  & 70.42  & 33.85 \\
Hateful Memes & AUROC & 0.7080 & 0.6330 & 0.6526 \\
CMU-MOSEI     & Acc-7 & 54.70  & 53.97  & -- \\
\bottomrule
\end{tabular}
\end{table}
\vspace{-1em}

\section{Conclusion and Future Works}
This work presents a new architecture based on mutual feedback to refine bimodal representations. Although we have mainly demonstrated our algorithm in the context of classification by fusion in the experiments, we would like to make it clear that the true contribution of this work goes beyond fusion. The real value of \modelname{} is in the iterative nature of information refinement while still offering performance better than, or on par with baselines. Under this viewpoint, we suggest several promising future directions:
\begin{itemize}
    \item Multi-modal generalization: How to incorporate more than 2 modalities (Appendix~\ref{app:multmod}).
    \item Modular LLMs: Where multiple LLMs with different specializations iteratively refine each other's outputs.
    \item Agent agreement: Multiple agents converse to iteratively resolve discrepancy in observation.
    \item Iteration benefits: When and what problems benefit from \modelname{}?
\end{itemize}
In all of these cases, the key point that needs to be investigated is the theoretical foundation of iterative refinement under the given constraints.

Furthermore, the fact that \modelname{} is essentially a coupled dynamical system calls for closer investigation into the theoretical connections to traditional dynamical systems (e.g.,~\cite{smallgain}). A practical application in the same direction can also be found in Physics-Informed Neural Networks~\citep{pinn}, where PDEs that describe such physical problems are in fact dynamical systems.

\subsection*{AI Disclosure}
In this work, we used generative AI tools for editing software code used in the experiments and relevant literature search. We have not used generative AI tools for any other tasks. All authors have reviewed all AI-assisted work. Specifically, the AI-generated code was reviewed and approved by all authors in-person, and the literature survey results were cross-checked as well. We take responsibility for the final content of this work, including text, claims or artifacts produced with the aid of generative AI.
\subsection*{Reproducibility Statement}
Every number in this paper regenerates from released run records: a bundle containing per-run configurations and results, model and training source, and scripts that rebuild each table byte-identically, with a verifier that checks file hashes and regenerates all tables from scratch. Checkpoints are indexed by SHA-256. The bundle will be released publicly with the final version of this paper.
\subsection*{Ethics Statement}
This work uses only datasets that are publicly available, and therefore does not require an IRB approval. No part of this work uses materials derived from human or animal subjects, either. Please see the `AI Disclosure' statement above for AI tool uses in this work.
\bibliography{refs}
\bibliographystyle{iclr2027_conference}
\newpage
\appendix
\section*{Appendix}
\section{Proofs and Derivations}
\subsection{Proof of Lemma~\ref{eq:lem1}}
\label{app:proof}
\textbf{Lemma~\ref{eq:lem1}.} 
\textit{Given Equation~\ref{eq:model}, the Jacobians of $\vz_x^*$ and $\vz_y^*$ are given as}
\begin{align*}
\frac{d\vz_x^*}{d(\cdot)}&=\left(I-\frac{\partial F}{\partial \vz_x^*}-\frac{\partial F}{\partial \vz_y^*}\left(I-\frac{\partial G}{\partial \vz_y^*}\right)^{-1}\frac{\partial G}{\partial\vz_x^*}\right)^{-1}\left(\frac{\partial F}{\partial (\cdot)}+\frac{\partial F}{\partial \vz_y^*}\left(I-\frac{\partial G}{\partial\vz_y^*}\right)^{-1}\frac{\partial G}{\partial (\cdot)}\right)\nonumber\\
    \frac{d\vz_y^*}{d(\cdot)}&=\left(I-\frac{\partial G}{\partial \vz_y^*}-\frac{\partial G}{\partial \vz_x^*}\left(I-\frac{\partial F}{\partial \vz_x^*}\right)^{-1}\frac{\partial F}{\partial\vz_y^*}\right)^{-1}\left(\frac{\partial G}{\partial (\cdot)}+\frac{\partial G}{\partial \vz_x^*}\left(I-\frac{\partial F}{\partial\vz_x^*}\right)^{-1}\frac{\partial F}{\partial (\cdot)}\right),
\end{align*}
\textit{where $(\cdot)$ is a placeholder indicating any independent variable (e.g., $\theta$ or $\vx$).}
\begin{proof}
For ease of presentation, we use the notation given in Equation~\ref{eq:block}.
We start with the original fixed point equation given in Equation~\ref{eq:model}. Applying the implicit function theorem on both fixed points w.r.t. the placeholder $(\cdot)$ yields the following coupled equations (parameter subscripts omitted).
    \begin{align}
        \frac{d\vz_x^*}{d(\cdot)}&=\frac{\partial F}{\partial (\cdot)}+\fzx\frac{d\vz_x^*}{d(\cdot)}+\fzy\frac{d\vz_y^*}{d(\cdot)}\label{eq:line1}\\
        \frac{d\vz_y^*}{d(\cdot)}&=\frac{\partial G}{\partial (\cdot)}+\gzy\frac{d\vz_y^*}{d(\cdot)}+\gzx\frac{d\vz_x^*}{d(\cdot)}.\label{eq:line2}
    \end{align}
We first tackle the $d\vz_x^*/d(\cdot)$ by grouping like terms:
\begin{equation}
    \frac{d\vz_x^*}{d(\cdot)}=(I-\fzx)^{-1}\left(\frac{\partial F}{\partial (\cdot)}+\fzy\frac{d\vz_y^*}{d(\cdot)}\right)
\end{equation}
Substitute this into Equation~\ref{eq:line2} to isolate the $d\vz_y^*/d(\cdot)$ terms:
\begin{align*}
    \frac{d\vz_y^*}{d(\cdot)}&=\frac{\partial G}{\partial (\cdot)}+\gzy\frac{d\vz_y^*}{d(\cdot)}+\gzx\left((I-\fzx)^{-1}\left(\frac{\partial F}{\partial (\cdot)}+\fzy\frac{d\vz_y^*}{d(\cdot)}\right)\right)\\
    &=\frac{\partial G}{\partial (\cdot)}+\gzy\frac{d\vz_y^*}{d(\cdot)}+\gzx(I-\fzx)^{-1}\frac{\partial F}{\partial(\cdot)}+\gzx(I-\fzx)^{-1}\fzy\frac{d\vz_y^*}{d(\cdot)}
\end{align*}
Solving for $d\vz_y^*/d(\cdot)$ yields the following.
\begin{align*}
    \left(I-\gzy-\gzx(I-\fzx)^{-1}\fzy\right)\frac{d\vz_y^*}{d(\cdot)}=\frac{\partial G}{\partial (\cdot)}+\gzx(I-\fzx)^{-1}\frac{\partial F}{\partial(\cdot)}\\
    \Rightarrow \frac{d\vz_y^*}{d(\cdot)}=\left(I-\gzy-\gzx(I-\fzx)^{-1}\fzy\right)^{-1}\left(\frac{\partial G}{\partial (\cdot)}+\gzx(I-\fzx)^{-1}\frac{\partial F}{\partial(\cdot)}\right)
\end{align*}
Solving for $d\vz_x^*/d(\cdot)$ is omitted due to symmetric arguments.

\end{proof}
Note that this derivation is assuming that $\vx$ and $\vy$ are independent of $(\cdot)$. If that is not the case, simply change $\partial F/\partial (\cdot)$ and $\partial G/\partial (\cdot)$ to 
\begin{equation*}
    \frac{\partial F}{\partial\vx}\frac{\partial\vx}{\partial(\cdot)}\text{~~~~~and~~~~~} \frac{\partial G}{\partial\vy}\frac{\partial\vy}{\partial(\cdot)}\text{~, respectively.}
\end{equation*}
\subsection{Influence dynamics} %
\label{app:influence}
Following the convention given in Equation~\ref{eq:block}, the Neumann series expansion of the influence of $d\vz_x^*/d(\cdot)$ yields the following (The other influence term is omitted due to symmetry). 
\begin{equation}
    \left(I-\fzx-\fzy(I-\gzy)^{-1}\gzx\right)^{-1}=\sum_{k=0}^\infty \left(\underbrace{\fzx}_{\text{Self-influence}}+\underbrace{\fzy(I-\gzy)^{-1}\gzx}_{\text{Cross-modal influence}}\right)^k%
    \label{eq:sum}
\end{equation}
Firstly, the self-influence term $\fzx$ indicates the direct influence of $\vz_x^*$. Next, the latter cross-modality influence term indicates the influence interaction between $F$ and $G$. To see why, notice the inner inverse term $(I-\gzy)^{-1}$ is also an infinite sum $\sum_k (\gzy)^k$, meaning the `local' influence of $\vz_y^*$ on $G$. It is being multiplied by $\fzy$ and $\gzx$ on both sides, which stand for single-step cross-modal influences. Put together, the term $\fzx+\fzy(I-\gzy)^{-1}\gzx$ can be loosely interpreted as the amount of influence $F$ processes on its own (term $\fzx$), \emph{and} the amount of information $F$ sends to $G$ (term $\gzx$), that influence being combined and looped to refine $\vz_y^*$ (term $(I-\gzy)^{-1}$), and finally being sent back to $F$ (term $\fzy$). Finally, this overall influence is being added over all possible powers (i.e., repetitions) of it. This means that the final inverse term is considering the information resulting from all possible influence paths.
\subsection{Sensitivity}
\label{app:deriv}
We wish to know how much the fixed point $\vz^*$ gets perturbed if we perturb the input $\vu=[\vx;\vy]$ by $\epsilon_\vu$.
To quantify this amount, we let $\vz_{\vu+\epsilon_\vu}^*\triangleq\vz^*+\epsilon_\vz$ (i.e., how much the fixed point of $\vu+\epsilon_\vu$ deviates from the unperturbed fixed point $\vz^*$.)
By Taylor expansion, we have
\begin{align*}
    T(\vz^*+\epsilon_\vz,\vu+\epsilon_\vu)&\approx T(\vz^*,\vu)+\frac{\partial T}{\partial \vz^*}\epsilon_\vz+\frac{\partial T}{\partial \vu}\epsilon_\vu\\
    \Rightarrow \vz^*+\epsilon_\vz&\approx \vz^*+\frac{\partial T}{\partial \vz^*}\epsilon_\vz+\frac{\partial T}{\partial\vu}\epsilon_\vu\\
    \Rightarrow\Vert\epsilon_\vz\Vert&\lesssim\left\Vert\left(I-\frac{\partial T}{\partial \vz^*}\right)^{-1}\right\Vert\left\Vert\frac{\partial T}{\partial \vu}\right\Vert\Vert\epsilon_\vu\Vert.
\end{align*}
\subsection{Sharper Condition for Invertibility}
\label{app:sharp}
To ensure the invertibility of the total influence, we proposed to constrain the norm of the Jacobian $J_T^*$. That itself is sufficient, but we can derive a sharper condition at the expense of more computation.\\
\begin{myprop}
If we have $\rho(\gzy)<1$ and $\rho(\fzx)<1$, the following is sufficient to guarantee the existence of the inverse term.
\begin{equation}
    \frac{\Vert\gzx\Vert\Vert\fzy\Vert}{(1-\Vert\gzy\Vert)(1-\Vert\fzx\Vert)}<1
    \label{eq:inv_cond}
\end{equation}
\end{myprop}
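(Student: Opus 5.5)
The plan is to factor out the self-influence part of the total influence term and handle the remaining cross-modal loop with a Neumann series, using the block notation of Equation~\ref{eq:block}. By symmetry it is enough to show that
\[
\Big(I-\fzx-\fzy(I-\gzy)^{-1}\gzx\Big)^{-1}
\]
exists. The analogous argument with the roles of $F$ and $G$ swapped gives the second total influence term in Lemma~\ref{eq:lem1}. Throughout, $\Vert\cdot\Vert$ is any submultiplicative operator norm, say the spectral norm.

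First, the hypotheses $\rho(\gzy)<1$ and $\rho(\fzx)<1$ mean that $1$ is not an eigenvalue of either block. Hence $I-\gzy$ and $I-\fzx$ are invertible, and the inner inverse $(I-\gzy)^{-1}$ appearing in the statement is well defined. Next I factor
\[
I-\fzx-\fzy(I-\gzy)^{-1}\gzx=(I-\fzx)\Big(I-K\Big),\qquad K\triangleq(I-\fzx)^{-1}\fzy(I-\gzy)^{-1}\gzx .
\]
Since $I-\fzx$ is invertible, the product is invertible as soon as $I-K$ is. By the Neumann series (the same expansion used in Equation~\ref{eq:sum}), $I-K$ is invertible whenever $\Vert K\Vert<1$. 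In that case $(I-K)^{-1}=\sum_k K^k$.

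It remains to bound $\Vert K\Vert$. By submultiplicativity,
\[
\Vert K\Vert\le\Vert(I-\fzx)^{-1}\Vert\,\Vert\fzy\Vert\,\Vert(I-\gzy)^{-1}\Vert\,\Vert\gzx\Vert .
\]
Expanding each inverse as a Neumann series and summing the geometric series gives
\[
\Vert(I-\fzx)^{-1}\Vert\le\frac{1}{1-\Vert\fzx\Vert},\qquad \Vert(I-\gzy)^{-1}\Vert\le\frac{1}{1-\Vert\gzy\Vert}.
\]
Substituting these yields exactly the left-hand side of Equation~\ref{eq:inv_cond}, so condition~\ref{eq:inv_cond} gives $\Vert K\Vert<1$, which finishes the argument. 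As a cross-check, the same computation shows that the Schur complement of $I-J_T^*$ is invertible. Hence $I-J_T^*$ itself is invertible, consistent with the sensitivity bound of Appendix~\ref{app:deriv}.

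The step I expect to be the main obstacle is the bound on the inner inverses. The geometric-series bound $1/(1-\Vert\cdot\Vert)$ needs $\Vert\fzx\Vert<1$ and $\Vert\gzy\Vert<1$, not merely spectral radius below one, because $\rho<1$ does not control the norm of the inverse. Without this, the denominators in Equation~\ref{eq:inv_cond} could even be negative with a positive product. I would therefore read the hypothesis as requiring both norms to be below $1$, which implies the spectral-radius assumptions since $\rho\le\Vert\cdot\Vert$, and state this explicitly.

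If one insists on only $\rho<1$, the fallback is to pick an induced norm adapted to each block in which its norm is below $1$; such a norm exists for any matrix with $\rho<1$. The cost is that condition~\ref{eq:inv_cond} must then be read in those adapted norms, and since the two blocks may need different norms, this requires care with mixed operator norms on the off-diagonal blocks.
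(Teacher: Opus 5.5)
Your argument is correct and is essentially the paper's proof: the same factorization $(I-\fzx)(I-K)$ with $K=(I-\fzx)^{-1}\fzy(I-\gzy)^{-1}\gzx$, submultiplicativity, and geometric-series bounds on the two inner inverses. Your direct bound $1/(1-\Vert\fzx\Vert)$ is slightly cleaner than the paper's detour through auxiliary constants $a,b$. Your caveat is also justified: the paper's proof opens by assuming $\Vert\gzy\Vert<a\le 1$ and $\Vert\fzx\Vert<b\le 1$, so it likewise relies on the norm hypotheses rather than on $\rho<1$ alone, which is exactly what you propose to state explicitly.
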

\begin{proof}
Assume $\Vert\gzy\Vert<a\leq1, \Vert\fzx\Vert<b\leq1$ and factor the total influence term as follows:
\begin{equation*}
    I-\fzx-\fzy(I-\gzy)^{-1}\gzx=(I-\fzx)(I-(I-\fzx)^{-1}\fzy(I-\gzy)^{-1}\gzx).
\end{equation*}
Then the sufficient condition for the inverse of the LHS to exist is 
\begin{equation}
    \rho((I-\fzx)^{-1}\fzy(I-\gzy)^{-1}\gzx)<1.
    \label{eq:eff}
\end{equation}
Instead of the spectral radius, which is hard to compute, let us focus on the norm since it forms an upper bound of $\rho$ (the spectral radius is upper-bounded by the norm).\\
By the Neumann sum definition, we have
\begin{align*}
    \left\Vert(I-\fzx)^{-1}\right\Vert\leq\sum_{k=0}^\infty\Vert\fzx\Vert^k<\frac{1}{1-b}\\
    \left\Vert(I-\gzy)^{-1}\right\Vert\leq\sum_{k=0}^\infty\Vert\gzy\Vert^k<\frac{1}{1-a},
\end{align*}
due to the norm assumptions made above.
Thus the bound of the norm of the LHS of Eqn.~\ref{eq:eff} is given as
\begin{equation}
    \left\Vert(I-\fzx)^{-1}\fzy(I-\gzy)^{-1}\gzx\right\Vert\leq\frac{\Vert\gzx\Vert\Vert\fzy\Vert}{(1-a)(1-b)}
    \label{eq:suff}
\end{equation}
Inequality~\ref{eq:suff} ensures that the influence term is invertible. This implies that Eqn.~\ref{eq:inv_cond} is sufficient condition for the invertibility of the influence term.
\end{proof}
Instead of bounding the norm of the entire $J_T^*$, this bound has the advantage that a fine-grained tuning is possible. 
\section{More than Two Modalities}
\label{app:multmod}
Although the main focus of this paper is to present bimodal feedback refinement due to  theoretical tightness, we also mention a way to generalize this approach to more than two modalities. We propose two possibilities that will be left as future works.
\paragraph{Hierarchical composition} Given modalities $\{\vx_i\}_{i=1}^N$ as the input set, we form a merge tree $\{(((\vx_i, \vx_j), \vx_k), \vx_\ell), \cdots, )\}$ that dictates the order of combination. The ordering can be determined randomly, or preferably by domain knowledge. Then for each pair $(\vx_i, \vx_j)$, the \modelname{} generates the pair $(\vz_i^*, \vz_j^*)$. In order to apply it to the next $\vx_k$, we need to combine the generated pair through another MLP $h:\mathbb{R}^d\times\mathbb{R}^d\mapsto\mathbb{R}^d$ to produce a temporary summary embedding $h(\vz_i^*, \vz_j^*)$. Then we take this summary and join with the next modality in line to produce the next embedding.\\
The shortcoming of this approach is obviously the order of mixing. Unless there is a justified way of ordering the modalities, random ordering will likely result in high-variance performance in the end. It also introduces an additional set of parameters for the combiner MLP.
\paragraph{Complete graph composition} When all modalities need to interact with everyone else, hence forming a complete graph-like connection topology. In this case, the individual encoding function is defined as
\begin{equation*}
    \vz_i^*=F_i(\vx_i, \{\vz_k\}_{k=1}^N).
\end{equation*}
This formulation is a natural extension of the idea given in this paper, and is intuitive to understand.
Unfortunately, the standard Jacobian analysis we used in our work no longer yields simple solutions under this scheme. The Jacobian $J_T^*$ now consists of $N^2$ block Jacobians, which means $(I-J_T^*)^{-1}$ no longer decomposes into nicely interpretable form given by Lemma~\ref{eq:lem1}. Hence, future works should further analyze the behavior of this larger Jacobian.
\section{Additional Experimental Results}
\subsection{Experimental Setups}
\label{app:exp_setup}
\paragraph{Dataset details} The following table shows the train-val-test split and class information in the datasets used. Following standard practice, we treat VQA v2 as classification over the 3,129 most frequent answers. Owing to compute constraints, we train and evaluate on the val2014 split, divided 9:1 into train and held-out portions with a fixed seed. The official train split is not used.  \textcolor{black}{The partition is over questions rather than images, and VQA~v2 carries $5.29$ questions per image on average, so the two halves share images: $99.7\%$ of held-out questions use an image that also appears in training. The image encoder is frozen, so this exposure reaches the model only through the fusion block, but the VQA~v2 numbers are best read as a held-out-question rather than a held-out-image estimate.}
\begin{table}[h]
\centering
\caption{Dataset statistics.}
\label{tab:datasets}
\begin{tabular}{llcccc}
\toprule
Dataset & Task & Train & Val & Test & Classes \\
\midrule
Hateful Memes~\citep{hateful} & Binary classification & 8,500 & 500 & 1,000 & 2 \\
VCR~\citep{vcr} & 4-way multiple choice & 212K & 26K & 25K & 4 \\
VQA v2 \citep{vqa_v2} & Open-ended QA & 193K & 21K & -- & 3,129 \\
SNLI-VE \citep{snli} & Visual entailment & 533K & 9.6K & 9.6K & 3 \\
CMU-MOSEI~\citep{mosei} & Sentiment (7-class) & 16K & 2K & 5K & 7 \\
\bottomrule
\end{tabular}
\end{table}

\paragraph{Implementation details}
We use a frozen CLIP~\citep{clip} image encoder and a frozen BERT-base~\citep{bert} text encoder (per-benchmark variations in Table~\ref{tab:encoders}).
The \modelname{} module has 768-dimensional hidden states with 8 attention heads.
We iterate a damped fixed-point update for $K=10$ steps, \textcolor{black}{$\vz^{(k+1)}=\beta\,T(\vu,\vz^{(k)})+(1-\beta)\,\vz^{(k)}$}, and train with the Jacobian range penalty of Section~\ref{sec:obj} ($\rho_\ell=0.7$, $\rho_h=0.9$, Hutchinson estimate; $\beta$ and probe counts in Table~\ref{tab:hyper}). \textcolor{black}{The two mixing coefficients act at different levels. The mixing parameter $\alpha$ \textcolor{black}{of the template} in Equation~\ref{eq:template} weighs the self-influence $S$ against the cross-modal $M$  \textcolor{black}{within a block}, while $\beta$ damps the iteration of the joint operator, weighing the new iterate against the previous state.}

\textcolor{black}{Implicit differentiation would instead evaluate $(I-J_T^*)^{-1}$ at a root, whereas we read $\vz^{(K)}$ at a nonzero residual (Table~\ref{tab:nofp}) and the undamped map is expansive on four of the five benchmarks (Table~\ref{tab:fixedpoint}), so the Neumann series for that inverse does not converge there.}
All results are means over three seeds, reported on the validation (dev) split of each benchmark; for VCR we evaluate on the first 5{,}000 validation examples.
CMU-MOSEI dataset is a tri-modal problem, for which we use a simple hierarchical approach: combine the image and audio modality first, then combine with text (Table~\ref{tab:encoders}).

\paragraph{Hyperparameters} All runs use AdamW, a linear warmup of 5\%, fixed-point tolerance $10^{-3}$, and $K{=}10$ damped iterations.
\begin{table}[h]
\centering
\caption{Training hyperparameters. \textcolor{black}{$\sigma(d)$ denotes a learned scalar passed through a sigmoid, initialised at $d=0.5$ so that $\beta$ begins at $0.62$; the other three benchmarks hold $\beta$ fixed.}}
\label{tab:hyper}
\begin{tabular}{lccccc}
\toprule
 & SNLI-VE & Hateful & VCR & MOSEI & VQA v2 \\
\midrule
learning rate & 1e-4 & 1e-4 & 1e-4 & 2e-5 & 1e-4 \\
epochs & 10 & 10 & 15 & 8 & 10 \\
batch size & 32 & 32 & 8 & 32 & 32 \\
weight decay & 0.01 & 0.01 & 0.01 & 1e-4 & 0.01 \\
$\lambda_j$ & 0.5 & 0.5 & 0.5 & 0.5 & 0.5 \\
$\lambda_f$ & 0.3 & 0.3 & 0.3 & 0.3 & 0 \\
Hutchinson probes & 5 & 1 & 1 & 5 & 1 \\
\textcolor{black}{damping $\beta$} & \textcolor{black}{$\sigma(d)$} & \textcolor{black}{0.5} & \textcolor{black}{0.5} & \textcolor{black}{$\sigma(d)$} & \textcolor{black}{0.5} \\
\bottomrule
\end{tabular}
\end{table}
\paragraph{Input encoders} Table~\ref{tab:encoders} is a summary of input encoders used in our experiments. All of our experiments use pretrained state-of-the-art encoders for both image and text modalities.
\begin{table}[h]
\centering
\caption{{Input encoders per benchmark (all frozen).}}
\label{tab:encoders}
\begin{tabular}{lll}
\toprule
Dataset & Modality-1 encoder & Modality-2 encoder \\
\midrule
SNLI-VE & CLIP image & BERT-base \\
Hateful Memes & CLIP image & BERT-base \\
VQA v2 & CLIP image & BERT-base \\
VCR & CLIP image + RoI features & BERT-base \\
CMU-MOSEI & \makecell[l]{COVAREP (audio \cite{covarep})\\FACET (visual~\cite{stoeckli2018facial})} & BERT-base\\

\bottomrule
\end{tabular}
\end{table}
\subsection{Extra Task Results}
\label{app:extra_task}
\begin{table}[hbt]%
\centering
\caption{Robustness to input corruption on Hateful Memes (dev AUROC).}%
\label{tab:robust}
\begin{tabular}{lcc}
\toprule
Perturbation & Concat. & \modelname \\
\midrule
\multicolumn{3}{c}{\textit{Image Perturbations}} \\
Noise ($\sigma{=}0.1$) & {0.6900} & {\textbf{0.7017}}\\
Noise ($\sigma{=}0.3$) & {0.6603} & {\textbf{0.6791}}\\
Blur ($r{=}2$) & {0.6850} & \textbf{0.6923}\\
Blur ($r{=}8$) & \textbf{0.6582} & 0.6564\\
\midrule
\multicolumn{3}{c}{\textit{Text Perturbations}} \\
Typo (5\%) & {0.6621} & \textbf{0.6791}\\
Typo (10\%) & {0.6508} & \textbf{0.6623}\\
\bottomrule
\end{tabular}
\end{table}%
\paragraph{Iterative refinement example}
We report an additional experiment on the SNLI-VE dataset to show that prediction accuracy increases over time. Figure~\ref{fig:stepacc} shows that accuracy rises steeply over the first three updates and is flat from $k{=}5$ onward, so the benefit comes from the early trajectory rather than from approaching a root.
\begin{figure}[t]
\centering
\includegraphics[width=0.55\linewidth]{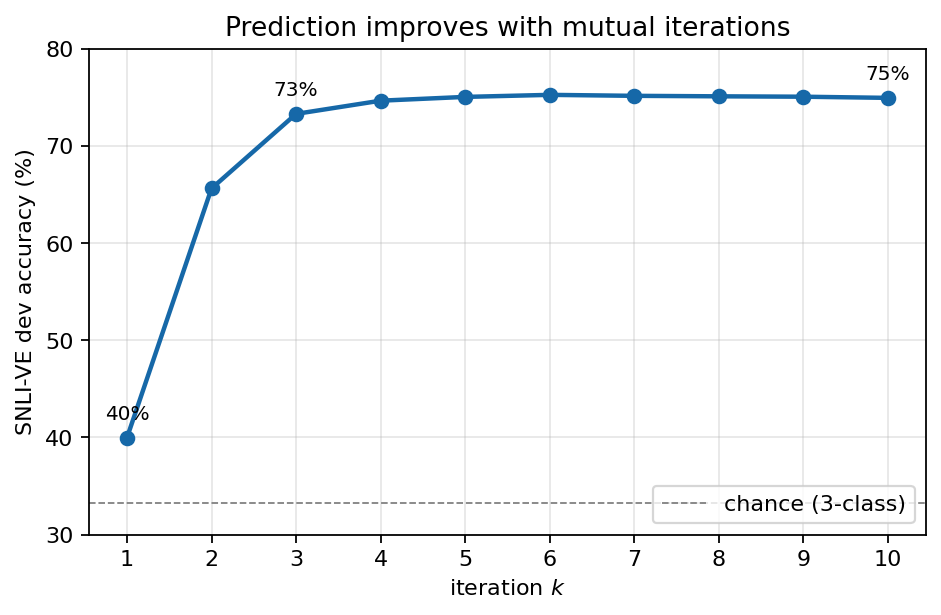}
\caption{{SNLI-VE dev accuracy read at each iteration $k$. }}
\label{fig:stepacc}
\end{figure}

\paragraph{Robustness Analysis}
We test robustness to input perturbation on the Hateful Memes dataset. The images were subjected to noise and blur perturbations while the texts were given typos. Table~\ref{tab:robust} shows the results over three seeds. The stochastic corruptions additionally average five independent draws, with all checkpoints seeing identical corrupted inputs within a draw. \\
Under strong additive image noise \modelname{} leads by $1.88$\,pp ($\sigma{=}0.3$, paired $2\sigma$ of $1.51$\,pp over three seeds and five corruption draws). 

\paragraph{COD10k results} Direct training of \modelname{} and the baseline architectures on the camouflage benchmark of Figure~\ref{fig:qual}. Results are averaged over three random seeds on a binary visual-entailment task constructed from COD10K images and automatically generated captions. Learning rates are chosen per architecture for training stability ($3{\times}10^{-5}$ for \modelname{} and concatenation, which otherwise collapses to a trivial optimum at $10^{-4}$; $10^{-4}$ for the attention baselines and low-rank multimodal fusion, LMF~\citep{lmf}). All other hyperparameters follow Table~\ref{tab:hyper}.
\begin{table}[h]
\centering
\caption{COD10k results.}\label{tab:cod}
\begin{tabular}{lccccc}
\toprule
 & Concat & Cross-attn & Self-attn & LMF & \modelname \\
\midrule
Accuracy & 0.7495 & 0.7924 & 0.7849 & 0.7799 & \textbf{0.7960} \\
$\Delta$ vs \modelname{} & $-$4.65pp & $-$0.35pp & $-$1.11pp & $-$1.61pp & -- \\
\bottomrule
\end{tabular}
\end{table}
\paragraph{Residual penalty effect} Final dev residual and metric, means over three seeds. On VQA v2 the small residual under $\lambda_f{=}0.3$ is the identity collapse rather than convergence: seed 42 ends at $7.62\times10^{-8}$ with the Jacobian norm at $1.000$.
\begin{table}[h]
\centering
\caption{Effect of the residual penalty.}
\label{tab:nofp}
\begin{tabular}{lcccc}
\toprule
& \multicolumn{2}{c}{Residual} & \multicolumn{2}{c}{Metric} \\
Dataset & $\lambda_{f}=0.3$ & $\lambda_{f}=0$ & $\lambda_{f}=0.3$ & $\lambda_{f}=0$ \\
\midrule
SNLI-VE & 1.0e-2 & 1.0e-1 & 75.15 & 75.06 \\
Hateful Memes & 4.2e-2 & 7.4e-2 & 70.21 & 70.37 \\
CMU-MOSEI & 2.3e-2 & 4.4e-2 & 54.20 & 54.97 \\
VCR & 1.1e-2 & 6.7e-2 & 62.95 & 60.49 \\
VQA v2 & 9.7e-4 & 9.4e-2 & 54.29 & 63.33 \\
\bottomrule
\end{tabular}
\end{table}
\subsection{Analysis Experiments}
\label{app:exp_analysis}
\paragraph{{When the residual penalty backfires}}
On VQA v2 the residual term admits a degenerate solution. The identity map sets $\Vert r(\vz;\vu)\Vert$ to exactly zero, and the optimization found it: the learned gate closed to approximately $10^{-7}$ on real inputs, so no cross-modal information flowed. Setting $\lambda_{f}=0$ on this benchmark alone restores the coupling and improves accuracy by $9.04$\,pp (Table~\ref{tab:nofp}). The other four benchmarks keep $\lambda_{f}=0.3$ (Table~\ref{tab:hyper}).

\paragraph{Can the collapse be prevented structurally?} Disabling the penalty on one benchmark is a configuration choice rather than a fix, so we also ran the structural alternative: bounding that gate away from zero and leaving the penalty on. With a floor of $0.1$ the constraint binds, since the gate settles at $0.1010$ across seeds, and accuracy recovers $2.30$\,pp of the $9.04$\,pp gained by removing the penalty (Table~\ref{tab:gatefloor}). The coupling does not come back with it: the cross-modal to self-modal sensitivity ratio on the image path is $0.0027$ under the floor against $0.0532$ with the penalty off, and the penalty-free model settles at $0.2339$, more than twice the floor it was never given. Bounding the gate closes one route to a zero residual while leaving another open, since the residual can also be driven down by shrinking the cross-modal function $M$ itself. A safeguard acting on the gate alone is therefore not sufficient. Neither is a bound on the norm of the whole Jacobian: the collapsed run sits at $\Vert\hat{J}_T\Vert=1.000$, comfortably above the lower end of the band \textcolor{black}{of $\mathcal{L}_{jac}$} in Equation~\ref{eq:loss}, while the off-diagonal blocks of Equation~\ref{eq:block} are exactly what goes to zero. Penalising those blocks directly is what the analysis in Section~\ref{sec:design} points to\textcolor{black}{, so we ran that as well: a hinge holding each off-diagonal block above the value the penalty-free model reaches, with the residual penalty left on. It works as a mechanism. All three seeds finish with more cross-modal sensitivity than the penalty-free model has, measured on held-out batches with dropout disabled, at $1.9$ to $15\times$ the floor. The accuracy does not follow: $53.35$, which against the collapsed arm is $-0.94$pp on a paired $2\sigma$ of $1.52$pp and is therefore not called. The two safeguards fail in opposite directions. Bounding the gate buys $2.30$pp without restoring the coupling, and restoring the coupling buys nothing, so what the residual penalty costs this benchmark is not the cross-modal path itself.}

\begin{table}[htb]
\centering
\caption{\textcolor{black}{Structural safeguards against the identity collapse, on VQA~v2. Means over three seeds. The last two columns are read at the epoch whose weights were kept, averaged over runs: the mean gate value on the image path, and that path's cross-modal sensitivity divided by its self-modal sensitivity.}}
\label{tab:gatefloor}
\begin{tabular}{lccccc}
\toprule
Configuration & Acc (\%) & $\Delta$ (pp) & called & gate & cross/self \\
\midrule
$\lambda_f = 0.3$, no floor & 54.29 & -- & & -- & -- \\
$\lambda_f = 0.3$, gate floor $0.1$ & 56.59 & $+$2.30 & yes & 0.1010 & 0.0027 \\
$\lambda_f = 0.3$, cross floor & 53.35 & $-$0.94 & no & 0.0894 & 0.0323 \\
$\lambda_f = 0$ & 63.33 & $+$9.04 & yes & 0.2339 & 0.0532 \\
\bottomrule
\end{tabular}
\end{table}

\paragraph{State design} \textcolor{black}{Each arm replaces only the state layout, with the iterated block widened until its parameter count matches. Matching is on the block: on VQA~v2 the single-state arm ends up $18.9\%$ larger in total, because widening the block also widens the projections feeding it and the head reading it, and it loses anyway. On Hateful Memes block-matching leaves that arm $12.7\%$ larger, so a second arm matched on the whole model is reported; the two agree. On CMU-MOSEI the single-state arm is the block \modelname{} replaced, and \modelname{}'s width was chosen by matching it, so all three blocks sit within $0.14\%$ of each other by construction. On Hateful Memes the baseline is a \modelname{} retrained alongside these arms rather than the run behind Table~\ref{tab:main}. The feature-sum arms are that fusion design under our encoders, splits and budget, not a reproduction of \citet{deqmm}, and their numbers are not its reported scores.}

 \textcolor{black}{On CMU-MOSEI the two manipulations separate: maintaining a state per modality is worth $0.84$\,pp, while letting those states read each other is not distinguishable from noise ($0.13$\,pp against a paired $2\sigma$ of $0.97$). Where the benchmark does reward fusion the cross-modal path is what matters instead, costing $4.20$\,pp on VQA~v2 (Table~\ref{tab:ablation}). The CMU-MOSEI single-state arm also reaches a final residual smaller by a factor of $25$ and still loses: converging better does not help here either, as in Figure~\ref{fig:stepacc}.}

\begin{table}[h]
\centering
\caption{ \textcolor{black}{State-design comparison. Each arm replaces only the state layout, at matched block parameter count, under the recipe of Table~\ref{tab:hyper}. Means over three seeds; a difference is called when it exceeds twice the sample standard deviation of the paired per-seed difference. Metrics differ per benchmark, so only the verdicts are comparable across blocks, not the sizes of the gaps.}}
\label{tab:statedesign}
\begin{tabular}{llccc}
\toprule
Dataset & Configuration & Score & $\Delta$ & called \\
\midrule
VQA v2 & \modelname{} (coupled states) & 63.33 & -- & \\
 & Single state, token level & 62.98 & $-$0.35 & yes \\
 & Feature-sum, one vector & 51.94 & $-$11.39 & yes \\
\midrule
Hateful Memes & \modelname{} (coupled states) & 0.7094 & -- & \\
 & Single state, block-matched & 0.7257 & $+$1.63 & no \\
 & Single state, model-matched & 0.7262 & $+$1.68 & no \\
 & Feature-sum, one vector & 0.6905 & $-$1.89 & no \\
\midrule
CMU-MOSEI & \modelname{} (coupled states) & 54.70 & -- & \\
 & Single state (replaced block) & 53.86 & $-$0.84 & yes \\
 & Feature-sum, one vector & 53.99 & $-$0.72 & no \\
\bottomrule
\end{tabular}
\end{table}

\begin{table}[thb]
\centering
\caption{ \textcolor{black}{Fixed-point behavior at the read depth $K{=}10$, measured on the full evaluation set of each benchmark for a single seed. \textcolor{black}{On CMU-MOSEI $n$ is $1{,}861$ of the $1{,}871$ validation clips: the text and the acoustic-visual features are distributed separately and ten clips fail to align.}} %
}
\label{tab:fixedpoint}

\begin{tabular}{lcccccccc}
\toprule
 & & & & \multicolumn{2}{c}{Residual} & \multicolumn{2}{c}{Metric} & \\
Dataset & Seed & $\rho$ at $\vz^{(K)}$ & Drift & at $\vz^{(K)}$ & at $\vz^{(300)}$ & at $\vz^{(K)}$ & at $\vz^{(300)}$ & $n$ \\
\midrule
CMU-MOSEI     & 42 & 0.965 & 0.38 & 3.2e-2 & 8.0e-6 & 53.9 & 51.8 & 1{,}861 \\
VCR           & 42 & 1.011 & 0.60 & 1.6e-2 & 1.0e-3 & 62.8 & 55.5 & 5{,}000 \\
Hateful Memes & 43 & 1.083 & 1.05 & 3.3e-2 & 1.5e-3 & 70.7 & 56.6 & 500 \\
VQA v2        & 42 & 1.340 & 1.42 & 7.4e-2 & 1.5e-3 & 62.8 & 27.4 & 21{,}029 \\
SNLI-VE       & 43 & 1.689 & 4.51 & 1.1e-2 & 4.4e-3 & 75.0 & 53.2 & 9{,}602 \\
\bottomrule
\end{tabular}

\end{table}

\paragraph{Fixed-point behavior} This experiment shows the per-dataset examinations of how the fixed-points behave (Table~\ref{tab:fixedpoint}). $\rho$ is the spectral radius of the undamped map and drift is $\Vert\vz^{(300)}-\vz^{(K)}\Vert/\Vert\vz^{(K)}\Vert$. Only CMU-MOSEI is contractive at the read depth, and on every benchmark iterating far past $K$ degrades the metric. This also empirically shows that it is beneficial to have a fixed point that can be reached in small number of iterations. \textcolor{black}{The iteration does reduce the residual, however: it falls by a factor of three to eighty before the read depth is reached, from $0.81$ to $1.1\times10^{-2}$ on SNLI-VE and from $0.84$ to $3.3\times10^{-2}$ on Hateful Memes, and the residual columns show it continuing to fall past it. The decrease is monotone in $k$ on four of the five; SNLI-VE is not monotone before $k{=}5$. The truncation therefore results in an approximate fixed point rather than an unrelated intermediate state.}

\paragraph{What the iteration costs} \textcolor{black}{We measure the cost of the iteration at inference: forward pass only, batch $32$, one RTX~3090, on the SNLI-VE architecture with random weights and random inputs (Table~\ref{tab:cost}). At the read depth $K{=}10$ \modelname{} costs $2.13\times$ the fastest single-pass baseline in wall clock and $1.87\times$ in FLOPs. The marginal cost is flat: each iteration adds $1.687$ GFLOPs per sample, constant to within $0.0003$ across $K \in \{1,2,5,10,20\}$, on an encoder base that the sweep extrapolates to $19.204$ GFLOPs against the $19.266$ the concatenation baseline measures directly. Peak memory does not move with $K$, since the forward carries one state rather than a stack of $K$ activations. Stopping early does not help here: left to run to a cap of $100$ with the tolerance in charge, the iteration reaches the cap, which is the finding of Table~\ref{tab:fixedpoint} seen from the cost side.}

\begin{table}[t]
\centering
\caption{\textcolor{black}{Inference cost at batch $32$ on one RTX~3090, SNLI-VE architecture. Wall clock is the median of $30$ timed passes after five warmups; FLOPs are counted for the whole forward including the frozen encoders; peak memory is measured per arm over an idle baseline. Random weights and random inputs, so this is the cost of a forward pass and not a statement about accuracy.}}
\label{tab:cost}
\begin{tabular}{lccc}
\toprule
Model & ms / sample & GFLOPs / sample & peak MB \\
\midrule
Concat MLP & 1.465 & 19.27 & 1015 \\
LMF & 1.483 & 19.27 & 1003 \\
Cross-attention & 1.621 & 21.20 & 1004 \\
Self-attention & 1.745 & 21.96 & 1006 \\
\midrule
\modelname{}, $K{=}1$ & 1.627 & 20.89 & 1003 \\
\modelname{}, $K{=}5$ & 2.304 & 27.64 & 1003 \\
\modelname{}, $K{=}10$ (read depth) & 3.127 & 36.08 & 1003 \\
\modelname{}, $K{=}20$ & 4.778 & 52.95 & 1003 \\
\bottomrule
\end{tabular}
\end{table}

\end{document}